\colorlet{darkgreen}{green!60!black}
\declaretheoremstyle[
	headfont=\normalfont\scshape,
	bodyfont=\normalfont\itshape,
	postheadspace=1em,
	qed=\scalebox{1.5}{$\lrcorner$}
]{helper}
\declaretheorem[style=helper, numbered=no, name=\scalebox{1.5}{$\ulcorner$}\quad Mertens' Theorem]{mertens}
\declaretheorem{theorem}
\declaretheorem{example}
\declaretheorem{corollary}
\declaretheorem{lemma}
\newcommand{\tn}[1]{\textnormal{#1}}
\newcommand{\bb}[1]{\mathbb{#1}}
\newcommand{\ms}[1]{\ensuremath{\mathsf{#1}}}
\newcommand{\mc}[1]{\mathcal{#1}}
\newcommand{\paren}[1]{\left( #1 \right)}
\newcommand{\card}[1]{\left| #1 \right|}
\newcommand{\seq}[1]{\left\langle #1 \right\rangle}
\newcommand{\bk}[1]{\left[ #1 \right]}
\DeclareMathOperator*{\argmax}{\arg\max}
\newcommand{\dist}[1]{\ms{Dist}\paren{#1}}
\newcommand{\Mid}{\:\middle\vert\:}
\newcommand{\cla}[1]{{\color{blue}#1}}
\newcommand{\clb}[1]{{\color{red}#1}}
\newcommand{\clc}[1]{{\color{teal}#1}}
\newcommand{\cld}[1]{{\color{violet}#1}}
\newcommand{\cle}[1]{{\color{purple}#1}}
\title{Reinforcement Learning with Depreciating Assets}
\author{Taylor Dohmen}
\author{Ashutosh Trivedi}
\affil{University of Colorado Boulder}
\begin{document}
\maketitle

\begin{abstract}
	A basic assumption of traditional reinforcement learning is that the value of a reward does not change once it is received by an agent. 
  The present work forgoes this assumption and considers the situation where the value of a reward decays proportionally to the time elapsed since it was obtained. 
  Emphasizing the inflection point occurring at the time of payment, we use the term \emph{asset} to refer to a reward that is currently in the possession of an agent. 
  Adopting this language, we initiate the study of depreciating assets within the framework of infinite-horizon quantitative optimization. In particular, we propose a notion of asset depreciation, inspired by classical exponential discounting, where the value of an asset is scaled by a fixed discount factor at each time step after it is obtained by the agent. We formulate a Bellman-style equational characterization of optimality in this context and develop a model-free reinforcement learning approach to obtain optimal policies.
\end{abstract}

\section{Introduction}
\emph{Time preference}~\citep{lowen92,frederick2002time} refers to the tendency of rational agents to value potential \emph{desirable outcomes} in proportion to the expected time before such an outcome is realized.
In other words, agents prefer to get a future reward sooner rather than later, all else being equal, and similarly, agents prefer to experience negative outcomes later rather than sooner. 
This phenomenon is typically codified in mathematical models in terms of discounting~\citep{Shapley53} and has been applied to a diverse array of disciplines concerned with optimization such as economics~\citep{heal2007discounting,philibert1999economics}, game theory~\citep{FilarVrieze96}, control theory~\citep{Puterman94}, and reinforcement learning~\citep{Sutton18}.
These models focus on the situation in which an agent moves through a stochastic environment in discrete time by selecting an action to perform at each time step and receiving an immediate reward based on the selected action and environmental state.
In particular, we consider exponential discounting, as introduced by \citet{Shapley53}, in which the agent carries this process on ad infinitum to generate an infinite sequence of rewards $\seq{r_n}^\infty_{n=1}$ with the goal of maximizing, with respect to a discount factor $\lambda \in (0,1)$, the discounted sum $\sum^\infty_{n=1} \lambda^{n-1} r_n$.
The discount factor is selected as a parameter and quantifies the magnitude of the agent's time preference.

A notable characteristic of the aforementioned discounted optimization framework is an implicit assumption that the utility of a reward remains constant once it is obtained by a learning agent.
While this seemingly innocuous supposition simplifies the model and helps to make it amenable to analysis, 
there are a number of scenarios where such an assumption is not appropriate.
Consider, for instance, the most basic and ubiquitous of rewards used to incentivize human behaviors: money.
The value of money tends to decay with time according to the rate of inflation, and the consequences of this decay are a topic of wide spread interest and intense study~\citep{hulten1980measurement,comley2015inflation,beckerman1991economics,fergusson2010money}.
\emph{Recognizing the fundamental role such decay has in influencing the dynamics of economic systems throughout the world, we consider its implications with respect to optimization and reinforcement learning in Markov decision processes.}


\subsection{Asset Depreciation}
When discussing a situation with decaying reward values, it is useful to distinguish between potential future rewards and actual rewards that have been obtained.
As such, we introduce the term \emph{asset} to refer to a reward that has been obtained by an agent at a previous moment in time.
Using this terminology, the present work may be described as an inquiry into optimization and learning under the assumption that assets \emph{depreciate}.
Depreciation, a term borrowed from the field of finance and accounting \citep{Wright64,Burt72}, describes exactly the phenomenon where the value of something decays with time.

We propose a notion of depreciation that is inspired by traditional discounting and is based on applying the same basic principle of time preference to an agent's history in addition to its future.
More precisely, we consider the situation in which an agent's behavior is evaluated with respect to an infinite sequence of cumulative accrued assets, each of which is discounted in proportion to how long ago it was obtained.
That is, we propose evaluating the agent in terms of functions on the sequence of assets 
\[
\seq{\sum^n_{k=1} r_k \gamma^{n-k}}^\infty_{n=1},
\] 
where $\gamma \in (0,1)$ is a discount factor, rather than on the sequence of rewards $\seq{r_n}^\infty_{n=1}$.
To motivate the study of depreciation and illustrate its naturalness, we examine the following hypothetical case-study.

\begin{example}[Used Car Dealership]
	\label{ex:car}
Consider a used car dealership with a business model involving purchasing used cars in locations with favorable regional markets, driving them back to their shop, and selling them for profit in their local market.
	Suppose that our optimizing agent is an employee of this dealership, tasked with managing capital acquisition.
	More specifically, this employee's job is to decide the destination from which the next car should be purchased, whenever such a choice arises. 
	The objective of the agent is to maximize the sum of the values of all vehicles in stock at the dealership over a discounted time-horizon for some discount factor $\lambda \in (0,1)$.
	Note that the discounted time-horizon problem is equivalent to the problem of maximizing expected terminal payoff of the process given a constant probability $(1-\lambda)$ of terminating operations at any point.

	It has long been known~\citep{Wykof70,ackerman1973used} that cars tend to continually depreciate in value after being sold as new, and so any reasonable model for the value of all vehicles in the inventory should incorporate some notion of asset depreciation.
	Suppose that another discount factor $\gamma \in (0,1)$ captures the rate at which automobiles lose value per unit of time. 
  Considering $\gamma$-depreciated rewards and $\lambda$-discounted horizon, the goal of our agent can be defined as a \emph{discounted depreciating optimization problem}.
  Alternatively, one may seek to optimize the \emph{long run average} (mean payoff) of $\gamma$-depreciated rewards.
\end{example}
    
\subsection{Discounted Depreciating Payoff} 
	Consider the sequence $x = \seq{3, 4, 5, 3, 4, 5, \ldots}$ of (absolute) rewards accumulated by the agent.
	In the presence of depreciation, the cumulative asset values at various points in time follow the sequence
	\begin{gather*}
		3, (3\gamma + 4), (3 \gamma^2+4\gamma+5), (3 \gamma^3+4\gamma^2+5\gamma+3), \\
		(3 \gamma^4+4\gamma^3+5\gamma^2+3\gamma+4), \ldots
	\end{gather*}
	For the $\lambda$-discounted time horizon, the value of the assets can be computed as follows:
\begin{align*}
	&\cla{3} + \clb{\lambda (3\gamma  {+} 4}) + \clc{\lambda^2 (3 \gamma^2 {+}4\gamma {+}5}) + \cld{\lambda^3(3 \gamma^3 {+}4\gamma^2 {+}5\gamma {+}3)} +\\
	&\qquad  \cle{\lambda^4(3 \gamma^4 {+}4\gamma^3 {+}5\gamma^2 {+}3\gamma+4)} +  \ldots \\
	&= (\cla{3} {+} \clb{3 \lambda\gamma} {+}\clc{3\gamma^2\lambda^2} {+}\cdots) +(\clb{4 \lambda } {+}\clc{4 \lambda^2\gamma} {+}\cld{4\lambda^3\gamma^2} {+}\cdots) + \\
	&\qquad (\clc{5\lambda^2}  {+}\cld{5\lambda^3\gamma} {+} \cle{\lambda^5\gamma^2} {+} \cdots)+ (\cld{3\lambda^3}  {+} \cle{3 \lambda\gamma^4} {+}3\gamma^2\lambda^5 {+}\cdots)+\cdots \\
	&= 3(1  {+} \lambda\gamma {+}\gamma^2\lambda^2 {+}\cdots) +  4\lambda(1  {+} \lambda\gamma {+}\lambda^2\gamma^2 {+}\cdots) + \\
	&\qquad  5\lambda^2(1  {+}\lambda\gamma {+}\lambda^2\gamma^2 {+} \cdots)+ 3\lambda^3(1  {+} \lambda\gamma {+} \gamma^2\lambda^2 {+}\cdots) +\ldots\\
	&= \frac{3+4\lambda+5\lambda^2+3\lambda^3+\cdots}{(1-\lambda\gamma)} \\
	&= \frac{3+4\lambda+5\lambda^2}{(1-\lambda\gamma)(1-\lambda^3)}.
\end{align*}
Notice that this $\gamma$-depreciated sum is equal to the $\lambda$-discounted sum when immediate rewards are scaled by a factor $\frac{1}{1-\lambda\gamma}$.
We show that this is not a mere coincidence, and prove that this equality holds also for general MDPs.

\subsection{Average Depreciating Payoff} 
Next, consider the long-run average of the depreciating asset values as the limit inferior of the sequence 
\begin{gather*}
3, \frac{3\gamma {+} 4}{2}, \frac{3 \gamma^2 {+} 4\gamma {+} 5}{3},  \frac{3 \gamma^3 {+} 4\gamma^2 {+} 5\gamma {+} 3}{4}, \\
\frac{3 \gamma^4 {+} 4\gamma^3 {+} 5\gamma^2 {+} 3\gamma {+} 4}{5}, \ldots
\end{gather*}
Based on classical Tauberian results~\citep{BewleyKohlberg76}, it is tempting to conjecture that the $\lambda$-discounted, $\gamma$-depreciating value converges to this mean as $\lambda \to 1$, e.g.
\begin{align*}
	\lim_{\lambda\to 1} (1-\lambda) \frac{3+4\lambda+5\lambda^2}{(1-\lambda\gamma)(1-\lambda^3)} &= \lim_{\lambda\to	1} \frac{3+4\lambda+5\lambda^2}{(1-\lambda\gamma)(1+\lambda+\lambda^2)} \\
	&= \frac{3+4+5}{3(1-\gamma)}.
\end{align*}
Indeed, we prove that this conjecture holds.

\paragraph{Contributions.} The highlights of this paper are given below.
\begin{itemize}[wide]
    \item[{$\blacktriangleright$}] We initiate the study of discounted and average payoff optimization in the presence of depreciation dynamics.
    \item[{$\blacktriangleright$}] We characterize the optimal value of the discounted depreciating payoff via Bellman-style optimality equations and use them to show that stationary deterministic policies are sufficient for achieving optimality. 
    Moreover, our characterization enables computing the optimal value and an optimal policy in polynomial time in the planning setting. 
    \item [{$\blacktriangleright$}] The optimality equation also facilitates a formulation of a variant of Q-learning that is compatible with asset depreciation, thereby providing a model-free reinforcement learning approach to obtain optimal policies in the learning setting.
    \item [{$\blacktriangleright$}]
    We show the classical Tauberian theorem relating discounted and average objectives can be extended to the depreciating reward setting. This result allows us to establish the sufficiency of stationary deterministic policies for optimality with respect to the average depreciating payoffs.
\end{itemize}

\paragraph{Organization.}
We begin by introducing necessary notation and reviewing the relevant technical background.
Section~\ref{sec:deprec} develops results on discounted depreciating payoff, while Section~\ref{sec:average} develops results for the average depreciating objective. 
We discuss some closely related work in Section~\ref{sec:related} and recap our contributions in the concluding section. 
\section{Preliminaries}

Let $\bb{R}$ be the set of real numbers and $\bb{N}$ the set of natural numbers. 
For a set $X$, we write $\card{X}$ to denote its cardinality and $\dist{X}$ for the set of all probability distributions over $X$.
A point distribution over $X$ is one that assigns probability 1 to a unique element of $X$ and probability 0 to all others.

The technical portions of the paper are carried out within the standard mathematical framework of asymptotic optimization and learning in environments modeled as finite Markov decision processes.
Our presentation follows the conventions set in the standard textbooks on the optimization and learning \citep{Puterman94,FilarVrieze96,SuttonBarto98,FeinbergShwartz12}.

\subsection{Markov Decision Processes}
A (finite) \emph{Markov decision process} (MDP) $M$ is a tuple $(S, A, T, R)$ in which $S$ is a finite set of states, $A$ is a finite set of actions, $T : \paren{S \times A} \to \dist{S}$ is a stochastic transition function specifying, for any $s,t \in S$ and $a \in A$ the conditional probability $T(t \mid s,a)$ of moving to state $t$ given that the current state is $s$ and that action $a$ has been chosen, and $R : \paren{S \times A} \to \bb{R}$ is a real-valued reward function mapping each state-action pair to a numerical valuation.
For any function $f : S \to \bb{R}$, i.e. any random variable on the state space of the MDP, we write $\bb{E}_T\bk{f(t) \mid s,a}$ to denote the conditional expectation $\sum_{t \in S} f(t) T(t \mid s,a)$ of $f$ on the successor state, given that the agent has selected action $a$ from state $s$.
A path in $M$ is a sequence $s_1 a_1 s_2 \cdots a_n s_{n+1}$ of alternating states and actions such that $0 < T(s_{k+1} \mid s_k, a_k)$ at every index.
Let $\mc{F}(M)$ denote the set of all finite paths in $M$ and $\mc{I}(M)$ denote the set of all infinite paths in $M$.

\paragraph{Payoffs, Policies, and Optimality.}
We focus on infinite duration quantitative optimization problems where an outcome may be concretized as an infinite path in the MDP.
Such an outcome is evaluated relative to some mapping into the real numbers $\mc{I}(M) \to \bb{R}$ called a payoff.
A policy on $M$ is a function $\pi : \mc{F}(M) \to \dist{A}$ that chooses an a distribution over the action set, given a finite path in $M$.
Fixing a policy $\pi$ induces, for each state $s$, a unique probability measure $\bb{P}^\pi_s$ on the probability space over the Borel subsets of $\mc{I}(M)$.
This enables the evaluation of a policy, modulo a payoff and initial state $s$, in expectation $\bb{E}^\pi_s$.
Let $\Pi^M$ be the set of all policies on the MDP $M$.
A policy is optimal for a payoff if it maximizes, amongst all other policies, the expected value of that payoff, and this maximal expectation is called the value of the payoff on $M$.

\paragraph{Strategic Complexity.}
The strategic complexity of a payoff characterizes the necessary structure required for a policy to be optimal.
A qualitative aspect of strategic complexity is based on whether or not there exist environments for which optimal policies are necessarily probabilistic (\emph{mixed}).
A policy is \emph{deterministic} (\emph{pure}) if returns a point distribution for every input.
A policy is stationary if $\pi(s_1 a_1 \cdots a_{n-1} s_n) = \pi(s_n)$ holds at every time $n$.
The class of deterministic stationary policies is of special interest since there are finitely many such policies on any finite MDP; we consider these policies as functions $S \to A$.

\subsection{Discounted and Average Payoffs}
Given a path $s_1 a_1 s_2 \cdots$ in an MDP, two well-studied objectives are the discounted payoff, relative to a discount factor $\lambda \in (0,1)$, and the average payoff, defined as
\begin{gather}
	\sum^\infty_{n=1} \lambda^{n-1} R(s_n, a_n), \text{ and } \tag{Discounted Payoff} \\
	\liminf_{n \to \infty} \frac{1}{n} \sum^n_{k=1} R(s_k, a_k). \tag{Average Payoff}
\end{gather}
The discounted value and average value functions are defined 
\begin{align}
	V_\lambda(s) &= \sup_{\pi \in \Pi^M} \bb{E}^\pi_s\bk{\sum^\infty_{n=1} \lambda^{n-1} R(s_n, a_n)}, \tag{Discounted Value} \\
	V(s) &= \sup_{\pi \in \Pi^M} \bb{E}^\pi_s\bk{\liminf_{n \to \infty} \sum^n_{k=1} \frac{R(s_k, a_k)}{n}}. \tag{Average Value}
\end{align}
A stronger notion of optimality, specific to the discounted payoff, is Blackwell optimality.
A policy $\pi$ is Blackwell optimal if there exists a discount factor $\lambda_0 \in (0,1)$ such that $\pi$ is optimal for the discounted payoff with any discount factor in the interval $[\lambda_0, 1)$.

An alternative characterization of the discounted value is as the unique solution to the optimality equation
\begin{equation*}
	V_\lambda(s) = \max_{a \in A} R(s, a) + \lambda\bb{E}_T\bk{V_\lambda(t) \mid s, a},
\end{equation*}
which is the starting point for establishing the following result on the complexity of discounted and average payoffs \citep{Puterman94,FeinbergShwartz12,FilarVrieze96}.

\begin{theorem}
    Both discounted and average payoffs permit deterministic stationary optimal policies. Moreover, optimal values for both payoffs can be computed in polynomial time. 
\end{theorem}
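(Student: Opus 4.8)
The plan is to handle the two payoffs separately, and in the discounted case the optimality equation quoted above already does most of the work. First I would define the Bellman operator $\mc{T}_\lambda$ on functions $V : S \to \bb{R}$ by $(\mc{T}_\lambda V)(s) = \max_{a \in A} R(s,a) + \lambda \bb{E}_T\bk{V(t) \mid s,a}$ and verify that it is a contraction of modulus $\lambda$ in the supremum norm, i.e.\ $\lVert \mc{T}_\lambda V - \mc{T}_\lambda V' \rVert_\infty \le \lambda \lVert V - V' \rVert_\infty$. Since functions on the finite set $S$ form a complete space, the Banach fixed-point theorem yields a unique fixed point, which must coincide with $V_\lambda$. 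Selecting, at each state $s$, an action attaining the maximum defines a map $S \to A$, i.e.\ a deterministic stationary policy; checking that the corresponding policy-evaluation operator shares this fixed point confirms optimality of that policy, which settles sufficiency of deterministic stationary policies for the discounted payoff.

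The average case is where I expect the genuine difficulty, and my plan is to route through Blackwell optimality rather than attack the average objective directly. The key observation is that for a fixed deterministic stationary policy $\pi$ the discounted value solves $(I - \lambda P_\pi) V^\pi_\lambda = R_\pi$, so by Cramer's rule each coordinate of $V^\pi_\lambda$ is a rational function of $\lambda$, hence analytic on a left neighborhood of $1$. Because there are only finitely many deterministic stationary policies and the pairwise differences of these rational functions either vanish identically or have isolated zeros, a single policy $\pi^\star$ is discount-optimal on an entire interval $[\lambda_0, 1)$; this $\pi^\star$ is Blackwell optimal and, in particular, deterministic stationary. I would then invoke the Tauberian theorem of \citet{BewleyKohlberg76} to conclude that $\lim_{\lambda \to 1}(1-\lambda)V_\lambda(s)$ exists and equals the average value $V(s)$, and that the same limit is attained by $\pi^\star$, so $\pi^\star$ is average optimal as well. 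The main obstacle lives precisely here: transferring optimality across the limit while accounting for the possibly multichain structure, where the gain need not be constant across states, so the Tauberian passage must be controlled coordinatewise rather than through a single scalar.

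For the complexity claims I would appeal to linear programming. The discounted value is characterized as the unique solution of $\min \sum_{s} V(s)$ subject to $V(s) \ge R(s,a) + \lambda \bb{E}_T\bk{V(t) \mid s,a}$ for every pair $(s,a)$; this program has $\card{S}$ variables and $\card{S}\cdot\card{A}$ constraints and is solvable in polynomial time, after which a greedy action at each state recovers an optimal policy. The average payoff admits an analogous but more elaborate linear program over state-action occupation measures, again of polynomial size, whose dual recovers the gain and bias, and for this I would cite the standard formulations in \citet{Puterman94,FilarVrieze96} and read off the polynomial-time bound from the polynomial-time solvability of linear programs.
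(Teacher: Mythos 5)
Your proposal is correct, and it reconstructs exactly the argument the paper is pointing to: this theorem is quoted as classical background, with the optimality equation named as the ``starting point'' and the proof deferred to the cited textbooks \citep{Puterman94,FeinbergShwartz12,FilarVrieze96}. Your chain of contraction/Banach fixed point for the discounted case, Blackwell optimality via rationality of $\lambda \mapsto V^\pi_\lambda$ plus the Tauberian passage for the average case, and polynomial-size linear programs for the complexity claims is precisely the standard treatment in those references, so there is nothing to flag.
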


\subsection{Reinforcement Learning}
Reinforcement learning (RL)~\citep{Sutton18} is a sampling-based optimization paradigm based on the feedback received from the environment in the form of scalar rewards. 
The standard RL scenario assumes a discounted payoff, and model-free approaches typically leverage the \emph{state-action value} or \emph{Q-value}: defined as the optimal value from state $s$, given that action $a$ has been selected, and is the solution of the equation
\begin{equation*}
	Q_\lambda(s, a) = R(s, a) + \lambda \bb{E}_T\bk{ V_\lambda(t) \mid s, a}.
\end{equation*}
The Q-value provides the foundation for the classic Q-Learning algorithm \citep{WatkinsDayan92}, which learns an optimal policy by approximating $Q_\lambda$ with a sequence $Q^n_\lambda$ of maps which asymptotically converge to $Q_\lambda$.
In particular, $Q^1_\lambda$ is initialized arbitrarily and then the agent explores the environment by selecting action $a = \argmax_{a \in A} Q^n_\lambda(s, a)$ from the current state $s$ and performing the update
\begin{equation}
	\label{eq:Q-update}
	Q^{n+1}_\lambda(s, a) \gets Q^n_\lambda(s, a) + \alpha_n \paren{R(s, a) + \lambda V^n_\lambda(t) - Q^n_\lambda(s, a)},
\end{equation}
in which $t$ is the next state as determined by the outcome of sampling the conditional distribution $T(\cdot \mid s, a)$, the family of $\alpha_n \in (0,1)$ are time-dependent parameters called learning rates, and $V^n_\lambda(t) = \max_{a \in A} Q^n_\lambda(t, a)$. 
The following theorem gives a sufficient condition for asymptotic convergence of the $Q$-learning algorithm.

\begin{theorem}[\citet{WatkinsDayan92}]
If every state-action pair in the environmental decision process is encountered infinitely often and the learning rates $0 \leq \alpha_n < 1$ satisfy the Robbins-Monroe conditions $\sum_{n=1}^{\infty} \alpha_n  = \infty$ and $\sum_{n = 1} ^{\infty} \alpha_n^2 < \infty$, then $Q^{n+1}_\lambda(s, a) {\to} Q_\lambda$ almost surely as $n {\to} \infty$.  
\end{theorem}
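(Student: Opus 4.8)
The plan is to view the update~\eqref{eq:Q-update} as an asynchronous stochastic approximation scheme converging to the fixed point of a contraction. First I would introduce the Bellman optimality operator $H$ acting on state-action value functions, $(HQ)(s,a) = R(s,a) + \lambda\bb{E}_T\bk{\max_{b \in A} Q(t,b) \mid s,a}$, and note that $Q_\lambda$ is exactly its fixed point by the defining $Q$-value equation. The key structural fact is that $H$ is a $\lambda$-contraction in the supremum norm $\lVert Q \rVert_\infty = \max_{(s,a)} \card{Q(s,a)}$: since $Q \mapsto \max_{b} Q(\cdot, b)$ is nonexpansive and $T(\cdot \mid s,a)$ is a probability distribution, one obtains $\lVert HQ - HQ' \rVert_\infty \leq \lambda \lVert Q - Q' \rVert_\infty$, so $Q_\lambda$ is the unique fixed point.

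Next I would pass to the error process $\Delta^n(s,a) = Q^n_\lambda(s,a) - Q_\lambda(s,a)$. Subtracting $Q_\lambda(s,a)$ from both sides of~\eqref{eq:Q-update} rewrites the update, at the pair $(s,a)$ visited at step $n$ with sampled successor $t$, as
\begin{equation*}
	\Delta^{n+1}(s,a) = (1 - \alpha_n)\Delta^n(s,a) + \alpha_n F^n(s,a),
\end{equation*}
where $F^n(s,a) = R(s,a) + \lambda V^n_\lambda(t) - Q_\lambda(s,a)$. Writing $\mc{H}_n$ for the filtration generated by the first $n$ steps, the conditional expectation telescopes into the operator, $\bb{E}\bk{F^n(s,a) \mid \mc{H}_n} = (HQ^n_\lambda)(s,a) - (HQ_\lambda)(s,a)$, so the contraction bound gives $\card{\bb{E}\bk{F^n(s,a) \mid \mc{H}_n}} \leq \lambda \lVert \Delta^n \rVert_\infty$. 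The conditional fluctuation $F^n(s,a) - \bb{E}\bk{F^n(s,a) \mid \mc{H}_n} = \lambda\paren{V^n_\lambda(t) - \bb{E}_T\bk{V^n_\lambda(t) \mid s,a}}$ is bounded in magnitude by $2\lambda\lVert Q^n_\lambda \rVert_\infty$, and since $R$ is bounded on the finite MDP its conditional variance is at most $C\paren{1 + \lVert \Delta^n \rVert_\infty^2}$ for a suitable constant $C$.

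With these two estimates, convergence follows from a standard stochastic approximation lemma of Dvoretzky type (such as Tsitsiklis's asynchronous stochastic approximation theorem): a coordinatewise process of the above form whose step sizes satisfy the Robbins--Monro conditions, whose conditional mean contracts toward zero by a factor $\kappa < 1$, and whose conditional variance grows at most quadratically in $\lVert \Delta^n \rVert_\infty$, converges to zero almost surely. The hypothesis that every state-action pair is encountered infinitely often ensures that each coordinate $(s,a)$ receives infinitely many updates, so that $\sum_n \alpha_n = \infty$ holds per coordinate and the lemma applies to all $\Delta^n(s,a)$ simultaneously with contraction factor $\kappa = \lambda$. Therefore $\Delta^n \to 0$, i.e. $Q^n_\lambda \to Q_\lambda$ almost surely.

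I expect the crux to be the stochastic approximation lemma itself rather than the algebraic reductions above. Its proof combines a martingale argument showing that the accumulated noise contribution vanishes almost surely with the contraction estimate used to squeeze the iterates toward the fixed point, and it must accommodate the asynchronous, coordinatewise nature of the scheme in which only the visited pair is modified at each step. If one accepts this lemma as a black box, the remaining verification of its hypotheses is routine.
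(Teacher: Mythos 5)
This theorem is stated in the paper purely as a cited background result (attributed to \citet{WatkinsDayan92}) with no proof supplied, so there is no internal argument to compare yours against; the relevant question is whether your route is sound, and it essentially is. Your proposal is the now-standard stochastic-approximation proof of Q-learning convergence: cast the update~\eqref{eq:Q-update} as an asynchronous stochastic approximation around the fixed point of the Bellman operator $H$, verify that $H$ is a $\lambda$-contraction in the supremum norm, split the increment into a conditional-mean term controlled by the contraction and a martingale-difference term with conditional variance growing at most quadratically in $\lVert \Delta^n \rVert_\infty$, and invoke the asynchronous convergence theorem of Tsitsiklis (1994) (equivalently, the lemma of Jaakkola, Jordan, and Singh, 1994). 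Your intermediate computations are correct: the identity $\bb{E}\bk{F^n(s,a) \mid \mc{H}_n} = (HQ^n_\lambda)(s,a) - (HQ_\lambda)(s,a)$, the resulting bound $\card{\bb{E}\bk{F^n(s,a) \mid \mc{H}_n}} \leq \lambda \lVert \Delta^n \rVert_\infty$, and the variance estimate via $\lVert Q^n_\lambda \rVert_\infty \leq \lVert Q_\lambda \rVert_\infty + \lVert \Delta^n \rVert_\infty$. It is worth knowing that this is a genuinely different argument from the original one of \citet{WatkinsDayan92}, which constructed an auxiliary ``action-replay process'' whose optimal Q-values track the iterates, rather than appealing to general stochastic-approximation machinery; your route is the one that has since become canonical because it cleanly separates the contraction structure from the noise analysis. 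One point to tighten: the Robbins--Monro conditions must be read \emph{per coordinate}, i.e.\ the sums $\sum \alpha_n$ and $\sum \alpha_n^2$ restricted to the times at which a given pair $(s,a)$ is actually updated must respectively diverge and converge. Your claim that infinitely many visits ``ensures that $\sum_n \alpha_n = \infty$ holds per coordinate'' does not follow: with global rates $\alpha_n = 1/n$ and a pair visited only at times $n = 2^k$, the restricted sum is finite. The per-pair conditions should be taken as part of the hypothesis (as they are, implicitly, in the cited theorem and in the stochastic-approximation lemma you invoke) rather than derived from infinite visitation.
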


\subsection{Depreciating Assets}
We define variations on the discounted and average payoffs based on the idea that the value of an asset decays geometrically in proportion with the amount of time elapsed since it was obtained as a reward.
That is, we consider the situation in which a payoff is determined as a function of the sequence $\seq{R(s_n, a_n)}^\infty_{n = 1}$, but rather of the sequence 
\[
\seq{\sum^n_{k=1} R(s_k, a_k) \gamma^{n-k}}^\infty_{n = 1}
\] 
of exponential recency-weighted averages of the agent's assets, where $\gamma \in (0,1)$ is a discount factor.

\section{Discounted Depreciating Payoff}
\label{sec:deprec}

In this section, we study discounted optimization, for $\lambda \in (0,1)$, under depreciating asset dynamics.
The payoff in this setting is captured by the expression
\begin{equation}
	\sum^\infty_{n=1} \lambda^{n-1} \sum^n_{k=1} R(s_k, a_k) \gamma^{n-k}, \tag{Discounted Depreciating Payoff}
\end{equation}
which has a corresponding value function
\begin{equation*}
	V_\lambda^\gamma(s) = \sup_{\pi \in \Pi^M} \bb{E}^\pi_s\bk{\sum^\infty_{n=1} \lambda^{n-1} \sum^n_{k=1} R(s_k, a_k) \gamma^{n-k}}.
\end{equation*}

Let us now return to the used car dealership example.

\begin{example}[Used Car Dealership Cont.]
Recognizing that cars depreciate continually after their first purchase, the employee realizes that their model should incorporate a notion of asset depreciation.
After a bit of market research, the employee selects another discount factor $\gamma \in (0,1)$ to capture the rate at which automobiles typically lose value over a given time step.
Using both discount factors $\lambda$ and $\gamma$, the employee can model the scenario as a discounted depreciating optimization problem.

For the sake of simplicity, suppose that there are only two locations $s_1$ and $s_2$ from which to choose the next target market, and that the only point where the employee has more than one possible action is at the dealership $s_d$ (from where they can chose action $a_1$ to go to $s_1$ or $a_2$ to go to $s_2$).
Realizing that it is unreasonable to plan without expecting unforeseen delays, the employee also introduces two parameters $\rho_1$ and $\rho_2$, which are success rates for buying a desired vehicle in $s_1$ and $s_2$ respectively.
Given that the agent is in location $s_i$, the rate $\rho_i$ is interpreted as the probability that they find a seller and purchase a vehicle before the end of the day and thus $1 - \rho_i$ is the probability that they fail to do so.
This situation is represented graphically as a finite MDP in \autoref{fig:car_mdp}, where actions are displayed in red, transition probabilities in blue, and immediate rewards (i.e. car values when they are stocked) in green.
If an action is omitted from an edge label, then there is only one action $a$ available.
If a transition probability is omitted, then the transition is deterministic, i.e. occurs with probability 1.
If a reward value is omitted, then the reward obtained is 0.

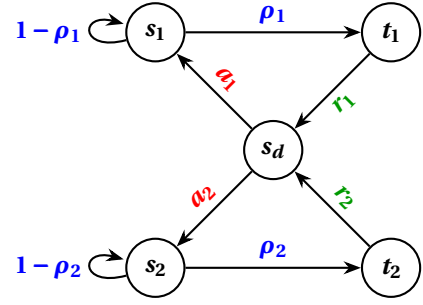
\begin{figure}
	\centering
	\begin{tikzpicture}[> = Stealth, thick]
		\node[state] (d) {$\bm{s_d}$};
		\node[state] (s1)[above left = 1cm and 1cm of d] {$\bm{s_1}$};
		\node[state] (t1)[above right = 1cm and 1cm of d] {$\bm{t_1}$};
		\node[state] (s2)[below left = 1cm and 1cm of d] {$\bm{s_2}$};
		\node[state] (t2)[below right = 1cm and 1cm of d] {$\bm{t_2}$};

		\path[->] (d) edge node[above,sloped,color=red] {$\bm{a_1}$} (s1);
		\path[->] (d) edge node[above,sloped,color=red] {$\bm{a_2}$} (s2);
		\path[->] (s1) edge[loop left] node[color=blue] {$\bm{1 - \rho_1}$} (s1);
		\path[->] (s1) edge node[above,color=blue] {$\bm{\rho_1}$} (t1);
		\path[->] (s2) edge[loop left] node[color=blue] {$\bm{1 - \rho_2}$} (s2);
		\path[->] (s2) edge node[above,color=blue] {$\bm{\rho_2}$} (t2);
		\path[->] (t1) edge node[sloped,below,color=darkgreen] {$\bm{r_1}$} (d);
		\path[->] (t2) edge node[sloped,above,color=darkgreen] {$\bm{r_2}$} (d);
	\end{tikzpicture}
	\caption{An MDP for the discounted depreciating optimization problem of the car dealership.}
	\label{fig:car_mdp}
\end{figure}

In traditional discounted optimization, the discount factor $\lambda$ imposes a certain type of trade-off.
Suppose, for instance, that $\rho_1$ is large while $r_1$ is small and that $\rho_2$ is small while $r_2$ is large.
Then a small discount factor indicates that it may payoff more to take action $a_1$ since it is likely that taking $a_2$ will result in significant delays and thus diminish the value of the eventual reward $r_2$.
On the other hand, if the discount factor is close to 1, then it may be worth it for the agent to accept the high probability of delay since the eventual discounted value will be closer to $r_2$.

Adding in the depreciation dynamics with discount factor $\gamma$, the trade-off remains, but to what extent depreciation alters the dynamics of a given environment and policy is unclear.
Intuition may suggest that introducing depreciation to discounted optimization should only make the risk-reward trade-off sharper, and one might further conjecture that when $\gamma$ is close to 0, the higher decay rate of cumulative asset value should drive an agent towards riskier behavior.
On the other hand, it is plausible that a depreciation factor close to one might embolden the agent towards similar risky actions because the opportunity cost of such behavior diminishes as assets are accumulated in greater quantities.
As we proceed with our analysis of the discounted depreciating payoff we attempt to shed light on questions like this and get to the core of what depreciation entails in this context.   
\end{example}

Our first main result establishes a Bellman-type equational characterization the discounted depreciating value.

\begin{theorem}[Optimality Equation]
	\label{thm:optimalityEQ}
	The discounted depreciating value is the unique solution of the equation
	\begin{equation}
		V_\lambda^\gamma(s) = \max_{a \in A} \frac{R(s, a)}{1 - \lambda\gamma} + \lambda \bb{E}_T\bk{V_\lambda^\gamma(t) \:\middle\vert\: s, a}.
	\end{equation}
\end{theorem}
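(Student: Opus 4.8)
The plan is to reduce the entire problem to the ordinary discounted case by making precise the rescaling that surfaces in the worked example: depreciation is nothing more than a uniform scaling of immediate rewards by $\frac{1}{1-\lambda\gamma}$.

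First I would establish an exact \emph{pathwise} identity. Fix any infinite path $s_1 a_1 s_2 \cdots$ and interchange the order of the double summation in the depreciating payoff. Since the MDP is finite the rewards are bounded by $M = \max_{s,a}\card{R(s,a)}$, and because $\lambda,\gamma \in (0,1)$ the iterated series is dominated by $M\sum_{n} n\lambda^{n-1} < \infty$; absolute convergence (Fubini for series) therefore justifies the swap. After interchanging, for each fixed $k$ the index $n$ runs from $k$ to $\infty$, so the coefficient of $R(s_k,a_k)$ is the geometric sum $\sum_{n=k}^\infty \lambda^{n-1}\gamma^{n-k} = \lambda^{k-1}/(1-\lambda\gamma)$, yielding
\[
\sum^\infty_{n=1}\lambda^{n-1}\sum^n_{k=1}R(s_k,a_k)\gamma^{n-k} = \frac{1}{1-\lambda\gamma}\sum^\infty_{k=1}\lambda^{k-1}R(s_k,a_k).
\]
On every path the depreciating payoff is thus exactly $\frac{1}{1-\lambda\gamma}$ times the ordinary discounted payoff.

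Second, I would push this identity through expectation and supremum. Because the factor $\frac{1}{1-\lambda\gamma}$ is a positive constant independent of the path and of the policy, it factors out of $\bb{E}^\pi_s$ and commutes with $\sup_{\pi\in\Pi^M}$. Consequently $V^\gamma_\lambda$ is precisely the ordinary discounted value of the MDP obtained from $M$ by replacing $R$ with the rescaled reward $R'(s,a) = \frac{R(s,a)}{1-\lambda\gamma}$. I would then invoke the standard characterization reviewed in the preliminaries: the discounted value is the unique solution of its Bellman optimality equation. Instantiated for $(S,A,T,R')$, that equation reads
\[
V^\gamma_\lambda(s) = \max_{a\in A}\frac{R(s,a)}{1-\lambda\gamma} + \lambda\,\bb{E}_T\bk{V^\gamma_\lambda(t)\Mid s,a},
\]
which is exactly the claimed equation, and uniqueness is inherited since the associated Bellman operator is a $\lambda$-contraction in the sup-norm, so Banach's fixed point theorem supplies a unique fixed point.

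The only genuinely delicate point is the interchange of summation together with the verification that the constant factors cleanly through both the expectation and the supremum; once absolute convergence is in hand this is routine, and the substance of the argument is simply the observation that $\gamma$-depreciation over a $\lambda$-discounted horizon coincides with ordinary $\lambda$-discounting applied to uniformly rescaled rewards.
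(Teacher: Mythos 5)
Your proof is correct and follows essentially the same route as the paper's: both establish the pathwise identity that the depreciating payoff equals $\frac{1}{1-\lambda\gamma}$ times the ordinary discounted payoff, pull this constant through expectation and supremum, and then reduce to the standard discounted optimality equation. The only differences are cosmetic --- the paper justifies the series rearrangement by recognizing the Cauchy product $\left(\sum_n (\lambda\gamma)^{n-1}\right)\left(\sum_n \lambda^{n-1}R(s_n,a_n)\right)$ and invoking Mertens' theorem instead of your Fubini-type interchange (both are valid here since everything converges absolutely), and it obtains the final equation by substituting the identity $V_\lambda^\gamma = V_\lambda/(1-\lambda\gamma)$ into the Bellman equation for $V_\lambda$ rather than by viewing $V_\lambda^\gamma$ as the ordinary discounted value of the reward-rescaled MDP $(S,A,T,R/(1-\lambda\gamma))$.
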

\begin{proof}
	By splitting the term $\lambda^{n-1}$ occurring in the definition of the discounted depreciating payoff into the product $\lambda^{n-k}\lambda^{k-1}$ and distributing these factors into the inner summation, we obtain the expression
	\begin{equation}
		\label{eq:to_be_factored}
		\sum^\infty_{n=1} \sum^n_{k=1} \lambda^{k-1} R(s_k, a_k) \lambda^{n-k} \gamma^{n-k}.
	\end{equation}
	The next step of the proof relies on the following classical result of real analysis (c.f. Theorem 3.50 of \citet{Rudin76}).
	\begin{mertens}
		Let $\sum^\infty_{n=1} x_n = X$ and $\sum^\infty_{n=1} y_n = Y$ be two convergent series of real numbers.
		If at least one of the given series converges absolutely, then their Cauchy product converges to the product of their limits:
		\begin{equation*}
			\paren{\sum^\infty_{n=1} x_n} \paren{\sum^\infty_{n=1} y_n} = \sum^\infty_{n=1} \sum^n_{k=1} x_k y_{n-k} = XY.
		\end{equation*}
	\end{mertens}

	The series \eqref{eq:to_be_factored} may be factored into the Cauchy product 
	\begin{equation}
		\label{eq:cauchy_factored}
		\paren{\sum^\infty_{n=1} (\lambda\gamma)^{n-1}} \paren{\sum^\infty_{n=1} \lambda^{n-1} R(s_n, a_n)},
	\end{equation}
	and since both terms in this Cauchy product converge absolutely, Mertens' theorem applies.
	Thus, noticing that the left-hand series is geometric, the expression \eqref{eq:cauchy_factored} is equivalent to
	\begin{equation*}
		\frac{1}{1-\lambda\gamma} \sum^\infty_{n=1} \lambda^{n-1} R(s_n, a_n).
	\end{equation*}
	Consequently, the discounted depreciating value may be written as
	\begin{equation}
		\label{eq:FPtoF1}
		\begin{aligned}
			V_\lambda^\gamma(s) &= \sup_{\pi \in \Pi^M} \bb{E}^\pi_s\bk{\frac{1}{1-\lambda\gamma} \sum^\infty_{n=1} \lambda^{n-1} R(s_n, a_n)} \\
			&= \frac{1}{1-\lambda\gamma} \sup_{\pi \in \Pi^M} \bb{E}^\pi_s\bk{\sum^\infty_{n=1} \lambda^{n-1} R(s_n, a_n)} \\
			&= \frac{V_\lambda(s)}{1-\lambda\gamma}.
		\end{aligned}
	\end{equation}
	The equational characterization of the discounted value $V_\lambda$ now facilitates the derivation of the desired equational characterization of the discounted depreciating value $V_\lambda^\gamma$ as
	\begin{equation}
		\label{eq:FPtoF2}
		\begin{aligned}
			V_\lambda^\gamma(s) &= \frac{1}{1-\lambda\gamma} \paren{\max_{a \in A} R(s,a) + \lambda \bb{E}_T\bk{V_\lambda(t) \Mid s, a}} \\
			&= \max_{a \in A} \frac{R(s,a)}{1 - \lambda\gamma} + \lambda \bb{E}_T\bk{V_\lambda^\gamma(t) \Mid s,a}.
		\end{aligned}
	\end{equation}
\end{proof}

\noindent
An immediate consequence of \autoref{thm:optimalityEQ} is a characterization of the strategic complexity of discounted depreciating payoffs.

\begin{corollary}[Strategic Complexity]
	For any discounted depreciating payoff over any finite MDP, there exists an optimal policy that is stationary and deterministic. 
\end{corollary}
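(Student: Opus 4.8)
The plan is to exploit the scaling identity that already emerges in the proof of \autoref{thm:optimalityEQ}. First I would recall that the Cauchy-product factorization there is really a pointwise statement about infinite paths: for any outcome $s_1 a_1 s_2 \cdots \in \mc{I}(M)$, the discounted depreciating payoff equals $\frac{1}{1-\lambda\gamma}$ times the ordinary discounted payoff of the same path. Taking expectations, this identity lifts to $\bb{E}^\pi_s\bk{\sum^\infty_{n=1} \lambda^{n-1} \sum^n_{k=1} R(s_k,a_k)\gamma^{n-k}} = \frac{1}{1-\lambda\gamma}\,\bb{E}^\pi_s\bk{\sum^\infty_{n=1} \lambda^{n-1} R(s_n,a_n)}$ for \emph{every} policy $\pi$ and initial state $s$, not merely for the two suprema.

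Since $\lambda,\gamma \in (0,1)$ forces $1-\lambda\gamma > 0$, the expected discounted depreciating payoff is a fixed positive rescaling of the expected discounted payoff, uniformly across all policies. A positive rescaling preserves the ordering of values, so the two objectives admit exactly the same set of optimal policies. I would then invoke the classical theorem stated earlier in the preliminaries: the ordinary discounted payoff admits a deterministic stationary optimal policy. That same policy is therefore optimal for the discounted depreciating payoff, which proves the corollary.

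A useful reframing, which I would mention as an alternative, is that the optimality equation of \autoref{thm:optimalityEQ} is syntactically an ordinary discounted Bellman equation in which the reward $R$ has been replaced by the rescaled reward $R' = \frac{R}{1-\lambda\gamma}$ at discount factor $\lambda$. Hence the discounted depreciating problem on $M = (S,A,T,R)$ is nothing more than a standard $\lambda$-discounted problem on the MDP $(S,A,T,R')$, to which the classical strategic-complexity results apply verbatim.

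There is essentially no obstacle here, since the real content was already expended in establishing the optimality equation. The only point demanding a moment of care is that the scaling relation must be applied at the level of each individual policy, so that maximizers are genuinely preserved, rather than only at the level of the optimal values $V_\lambda$ and $V_\lambda^\gamma$; but because the constant $\frac{1}{1-\lambda\gamma}$ factors out of the expectation uniformly in $\pi$, this is immediate.
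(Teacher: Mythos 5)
Your proof is correct and follows essentially the same route as the paper: the paper treats this corollary as an immediate consequence of \autoref{thm:optimalityEQ}, whose proof establishes precisely the pathwise scaling identity you use, reducing the discounted depreciating problem to a standard $\lambda$-discounted problem (equivalently, one with rewards rescaled by $\frac{1}{1-\lambda\gamma}$) for which deterministic stationary optimal policies are classical. Your observation that the scaling must hold uniformly over policies, not merely at the suprema, is exactly the right point of care and is implicit in the paper's derivation of equation \eqref{eq:FPtoF1}.
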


\autoref{thm:optimalityEQ} enables a number of extensively studied algorithmic techniques to be adapted for use under the discounted depreciating payoff.
In particular, the equational characterization of the discounted depreciating value implies that it is the unique fixed point of a contraction mapping \citep{banach1922operations}, which in turn facilitates the formulation of suitable variants of planning algorithms based on foundational methods such as value iteration and linear programming. 
This allows us to bound the computational complexity of determining discounted depreciating values in terms of the size of the environmental MDP and the given discount factors.

\begin{theorem}[Computational Complexity]
	The discounted depreciating value and a corresponding optimal policy are computable in polynomial time.
\end{theorem}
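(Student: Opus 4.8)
The plan is to reduce the problem to the ordinary discounted case by exploiting the scaling identity $V_\lambda^\gamma(s) = \frac{V_\lambda(s)}{1-\lambda\gamma}$ established as equation \eqref{eq:FPtoF1} in the proof of \autoref{thm:optimalityEQ}. Since the classical theory (cited above from \citet{Puterman94,FeinbergShwartz12,FilarVrieze96}) guarantees that the ordinary discounted value $V_\lambda$ together with a deterministic stationary optimal policy is computable in polynomial time, I would first compute $V_\lambda$ and then obtain $V_\lambda^\gamma$ by multiplying each entry by the single constant factor $\frac{1}{1-\lambda\gamma}$. This scalar multiplication is trivially polynomial in the input size, so the value computation inherits the polynomial-time bound of the underlying discounted MDP solver, provided the bit-complexity of $\frac{1}{1-\lambda\gamma}$ is controlled (which it is, being polynomial in the encodings of $\lambda$ and $\gamma$).

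For the optimal policy, I would observe that scaling by the positive constant $\frac{1}{1-\lambda\gamma}$ does not alter the maximizer in the optimality equation: substituting the scaling identity shows that the action maximizing $\frac{R(s,a)}{1-\lambda\gamma} + \lambda\,\mathbb{E}_T[V_\lambda^\gamma(t) \mid s,a]$ is precisely the action maximizing $R(s,a) + \lambda\,\mathbb{E}_T[V_\lambda(t) \mid s,a]$. Hence the deterministic stationary policy optimal for the ordinary $\lambda$-discounted payoff is already optimal for the discounted depreciating payoff, and no computation beyond solving the standard discounted problem is required.

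To make the polynomial-time claim fully rigorous in the bit-complexity model, I would phrase the computation directly as the linear program minimizing $\sum_{s \in S} V(s)$ subject to
\[
V(s) \geq \frac{R(s, a)}{1-\lambda\gamma} + \lambda\,\mathbb{E}_T[V(t) \mid s, a] \quad \text{for all } s \in S,\ a \in A.
\]
This program has $\card{S}$ variables and $\card{S}\cdot\card{A}$ constraints, whose coefficients have bit-size polynomial in the encoding of the MDP and the discount factors; since linear programs are solvable in time polynomial in their bit-size, the value is obtained \emph{exactly} and in polynomial time, with an optimal action at each state recovered from the tight constraints. The only point requiring care here is not a deep obstacle but rather this exactness guarantee: the value-iteration realization of the contraction mapping referenced above converges only geometrically and would certify merely an approximate answer, so I would rely on the linear programming formulation (equivalently, the exact LP solution of the underlying discounted MDP) to obtain an exact polynomial-time bound.
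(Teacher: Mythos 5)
Your proposal is correct, and at its core it runs on the same engine as the paper's proof: the scaling identity $V_\lambda^\gamma = V_\lambda/(1-\lambda\gamma)$ from the proof of \autoref{thm:optimalityEQ}, made algorithmic via linear programming. The packaging differs, though. The paper writes a primal LP natively for the depreciating problem (minimizing $\sum_{s} x_s v_s$ for arbitrary positive weights $x_s$) and extracts a policy either by taking the $\argmax$ in the optimality equation at the solution $v^*$ or from a dual LP, deferring correctness to the proof of \autoref{thm:optimalityEQ}. You instead lead with a black-box reduction: solve the classical $\lambda$-discounted MDP exactly, scale its value vector by the positive constant $\frac{1}{1-\lambda\gamma}$, and observe that this scaling leaves the maximizing action unchanged, so the classical optimal policy is \emph{itself} optimal for the depreciating payoff. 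This makes explicit a fact the paper leaves implicit---the two payoffs share optimal policies---and lets you reuse any off-the-shelf exact discounted-MDP solver, while the paper's primal/dual pair gives a self-contained presentation and exhibits optimal policies as supports of dual solutions. Two further points in your favor: your constraint $V(s) \geq \frac{R(s,a)}{1-\lambda\gamma} + \lambda \bb{E}_T\bk{V(t) \mid s,a}$ is the one actually consistent with \autoref{thm:optimalityEQ}, whereas the constraint printed in the paper's program \eqref{eq:valueLP} scales the transition term by $\frac{1}{1-\lambda\gamma}$ as well, which disagrees with the optimality equation and appears to be a typo; and your caution that value iteration certifies only geometric-rate approximations, so the exact polynomial-time claim should rest on the LP, is precisely why the paper's complexity bound goes through linear programming rather than the contraction-mapping machinery it mentions.
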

\begin{proof}
	Let $\delta_{i,j} = \begin{cases}
		1 &\tn{if } i=j \\
		0 &\tn{otherwise}
	\end{cases}$ be the Kronecker delta.
	Suppose that, for each state $s$ in the environment $M$, we have an associated real number $0 < x_s$, chosen arbitrarily.
	The unique solution to the following linear program is the vector of values from each state of $M$.
	\begin{equation}
		\label{eq:valueLP}
		\begin{aligned}
			&\tn{minimize } \sum_{s \in S} x_s v_s \quad\tn{subject to} \\
			&\frac{R(s,a)}{1-\lambda\gamma} \leq \sum_{t \in S} v_t \paren{\delta_{s,t} - \frac{\lambda T(t \mid s,a)}{1-\lambda\gamma}} &\forall (s,a) \in S \times A
		\end{aligned}
	\end{equation}
	From a solution $v^*$ to \eqref{eq:valueLP}, an optimal policy can be obtained as
	\begin{equation*}
		\pi(s) = \argmax_{a \in A} \frac{R(s,a)}{1-\lambda\gamma} + \lambda \bb{E}_T\bk{v^*_t \mid s,a}.
	\end{equation*}
	Alternatively, an optimal policy may be derived from the solution to the dual linear program given as follows.
	\begin{equation}
		\label{eq:policyLP}
		\begin{aligned}
			&\tn{maximize} \sum_{(s,a) \in S \times A} \frac{R(s,a)}{1 - \lambda\gamma} y_{s,a} \quad\tn{subject to} \\
			&x_s = \sum_{(t,a) \in S \times A} \paren{\delta_{s,t} - \frac{\lambda T(t \mid s,a)}{1-\lambda\gamma}} &\forall s \in S \\
			&0 \leq y_{s,a} &\forall (s,a) \in S \times A
		\end{aligned}
	\end{equation}
	In particular, if $y^*$ is a solution to \eqref{eq:policyLP}, then any policy $\pi$ for which the inequality $0 < y^*_{s,\pi(s)}$ holds at every state is optimal.
	The correctness of these linear programs follows from the proof of \autoref{thm:optimalityEQ}.
 Since linear programs can be solved polynomial time, the theorem follows.
\end{proof}

\autoref{thm:optimalityEQ} allows the formulation of an associated Q-value
\begin{equation*}
	Q_\lambda^\gamma(s, a) = \frac{R(s, a)}{1 - \lambda\gamma} + \lambda \bb{E}_T\bk{V_\lambda^\gamma(t) \:\middle\vert\: s, a},
\end{equation*}
which may be used to construct a Q-learning iteration scheme for discounted depreciating payoffs as
\begin{equation}
	\label{eq:Past-Q-update}
	\hspace{-3pt} Q^{\gamma, n+1}_\lambda(s, a) {\gets} Q^{\gamma, n}_\lambda(s, a) {+} \alpha_n \paren{\frac{R(s, a)}{1 {-} \lambda\gamma} {+} \lambda V^{\gamma, n}_\lambda(t) {-} Q^{\gamma,n}_\lambda(s, a)}.
\end{equation}

\begin{theorem}
	If each state-action pair of the environment is encountered infinitely often and the learning rates satisfy the Robbins-Monroe convergence criteria 
	\begin{equation*}
		\sum^\infty_{n=0} \alpha_n = \infty \quad\tn{ and }\quad \sum^\infty_{n=0} \alpha_n^2 < \infty,
	\end{equation*}
	then iterating \eqref{eq:Past-Q-update} converges almost surely to the discounted depreciating Q-value as $n \to \infty$:
	\begin{equation*}
		\lim_{n \to \infty} Q^{\gamma,n}_\lambda = Q_\lambda^\gamma.
	\end{equation*}
\end{theorem}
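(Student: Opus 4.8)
The plan is to reduce the claim to the standard Q-learning convergence theorem of \citet{WatkinsDayan92} by means of a reward-rescaling transformation. Define an auxiliary MDP $M' = (S, A, T, R')$ that shares the state space, action space, and transition function of $M$, but whose reward function is the rescaled map $R'(s,a) = \frac{R(s,a)}{1-\lambda\gamma}$; this is well defined and bounded since $\lambda, \gamma \in (0,1)$ force $0 < 1 - \lambda\gamma$. The key observation is that the depreciating update \eqref{eq:Past-Q-update} performed on $M$, with its learning rates and sampled successor states, is syntactically identical to the ordinary discounted Q-update \eqref{eq:Q-update} performed on $M'$ with discount factor $\lambda$: replacing $R$ by $R'$ turns $R(s,a) + \lambda V^n_\lambda(t) - Q^n_\lambda(s,a)$ into exactly $\frac{R(s,a)}{1-\lambda\gamma} + \lambda V^{\gamma,n}_\lambda(t) - Q^{\gamma,n}_\lambda(s,a)$, and both schemes define $V^n$ as the state-wise maximum of $Q^n$.

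Next I would verify that the limit target is the correct one, i.e. that the ordinary discounted Q-value of $M'$ coincides with the depreciating Q-value $Q_\lambda^\gamma$ of $M$. Writing $V'_\lambda$ for the standard discounted value on $M'$, its Bellman optimality equation reads $V'_\lambda(s) = \max_{a \in A} R'(s,a) + \lambda \bb{E}_T\bk{V'_\lambda(t) \Mid s,a}$, which upon substituting $R'$ is precisely the depreciating optimality equation of \autoref{thm:optimalityEQ}. Since that equation has a unique solution, $V'_\lambda = V_\lambda^\gamma$, and feeding this equality through the definitions of the respective Q-values gives $Q'_\lambda = Q_\lambda^\gamma$. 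Thus the fixed point toward which standard Q-learning on $M'$ converges is exactly the object we wish to reach.

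With the reduction in place, I would invoke \citet{WatkinsDayan92}. The hypotheses transfer verbatim: $M$ and $M'$ have identical transition dynamics, so ``every state-action pair is encountered infinitely often'' is the same condition in both, the successor states sampled from $T(\cdot \mid s,a)$ are identically distributed, and the learning rates $\alpha_n$ are untouched by the transformation and still satisfy the Robbins-Monro criteria. Applying the theorem to $M'$ yields $Q^{\gamma,n}_\lambda \to Q'_\lambda = Q_\lambda^\gamma$ almost surely. The main obstacle is not analytic but one of bookkeeping: one must be careful that the rescaling leaves every quantity on which \citet{WatkinsDayan92} depends---boundedness of rewards, the sampling measure, and the learning-rate schedule---genuinely invariant, so that no hypothesis of the cited convergence theorem is silently weakened by the reduction.
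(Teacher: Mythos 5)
Your proposal is correct and follows essentially the same route as the paper: the paper's proof also reduces the claim to the convergence theorem of \citet{WatkinsDayan92} by observing, via equations \eqref{eq:FPtoF1} and \eqref{eq:FPtoF2}, that the depreciating optimality equation is the standard discounted one modulo the factor $\frac{1}{1-\lambda\gamma}$. Your auxiliary-MDP construction with $R'(s,a) = \frac{R(s,a)}{1-\lambda\gamma}$ is simply a more explicit packaging of that same reduction---indeed more careful than the paper's rather terse argument, since you verify the syntactic identity of the update rules, the identification of fixed points via the uniqueness in \autoref{thm:optimalityEQ}, and the invariance of the hypotheses of the cited theorem.
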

\begin{proof}
	Equations \eqref{eq:FPtoF1} and \eqref{eq:FPtoF2} show that the optimality equation for the discounted depreciating value reduces to the optimality equation for the discounted value, modulo a multiplicative factor dependent on $\lambda$ and $\gamma$.
	It therefore follows that discounted depreciating Q-learning, via iteration of \eqref{eq:Past-Q-update}, converges in the limit to the optimal $Q^\gamma_\lambda$ under the same conditions that standard discounted Q-learning, via iteration of \eqref{eq:Q-update}, converges in the limit to the optimal $Q_\lambda$.
	Hence, we conclude that discounted depreciating Q-learning asymptotically converges given that each state-action pair is encountered infinitely often and that the convergence conditions in the theorem statement are satisfied by the learning rates.
\end{proof}

\subsection{Discussion}
Besides the technical implications of \autoref{thm:optimalityEQ}, its proof provides some insight about the interplay between discounting and depreciation.
A foundational result \citep{BewleyKohlberg76} in the theory of infinite-horizon optimization establishes that over a common MDP the discounted value asymptotically approaches the average value, up to a multiplicative factor of $(1-\lambda)$, as $\lambda$ approaches 1 from below:
\begin{equation*}
	\lim_{\lambda \to 1} (1 - \lambda) V_\lambda = V.
\end{equation*}
Following this approach, we consider the asymptotic behavior of the discounted depreciating value when taking similar limits of the discount factors.
Using the identity $V_\lambda^\gamma = \frac{V_\lambda}{1 - \lambda\gamma}$ from equation \eqref{eq:FPtoF1} as the starting point for taking these limits yields the equations
\begin{gather}
	\lim_{\lambda \to 1} (1 - \lambda) V_\lambda^\gamma = \frac{V}{1 - \gamma}, \label{eq:FPtoM} \\
	\lim_{\gamma \to 1} V_\lambda^\gamma = \frac{V_\lambda}{1 - \lambda}, \label{eq:g_to_0} \\
	\lim_{\gamma \to 0} V_\lambda^\gamma = V_\lambda. \label{eq:g_to_1}
\end{gather}

\begin{figure}
	\centering
	\includegraphics[width=\columnwidth]{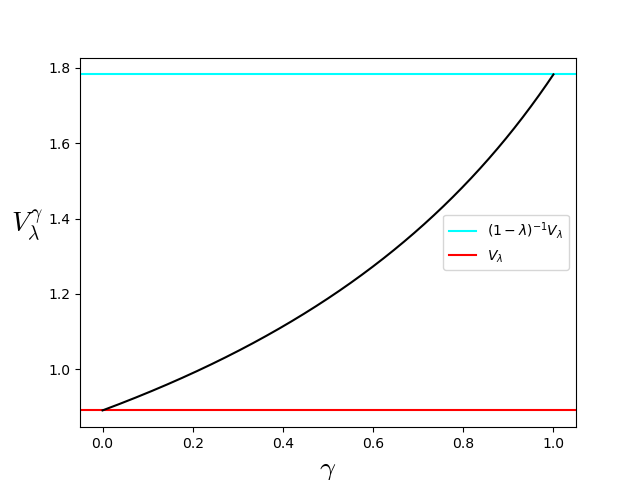}
	\caption{
		A graph of the discounted depreciating value of the car dealership example as $\gamma$ varies over the interval $(0,1)$ with fixed $\lambda = \frac{1}{2}$.
		The parameter values for this plot are $\rho_1 = \frac{1}{2}$, $\rho_2 = \frac{1}{4}$, $r_1 = 5$, $r_2 = 7$.
		}
		\label{fig:fixed_l_varied_g}
\end{figure}

The relationships described by equations \eqref{eq:g_to_1} and \eqref{eq:g_to_0}, illustrated by \autoref{fig:fixed_l_varied_g}, are justified conceptually by a simple interpretation that is helpful for building intuition around the behavior of the discounted depreciating payoff.
One can think of the standard discounted payoff as a special case of the discounted depreciating payoff where $\gamma = 0$.
That is, the optimizing agent working towards maximizing a discounted payoff does not consider the value of their assets whatsoever at any point in time; the only quantities of concern from their perspective are the incoming stream of rewards.
Interpreting $\gamma$ as a measure of the agent's memory of past outcomes, it follows naturally that the discounted depreciating payoff reduces to the discounted payoff when the agent has no recollection whatsoever.
Connecting this notion back to depreciation, it can be argued that, from the agent's perspective, externally driven depreciation of assets is morally equivalent to an internally driven perception of depreciation based on an imperfect recollection of past events.

Conversely, an agent with a perfect memory operating under a discounted payoff would end up maximizing this payoff on the sequence of cumulative assets $\seq{\sum^n_{k=1}R(s_k, a_k)}^\infty_{n=1}$ rather than the sequence $\seq{R(s_n, a_n)}_{n=1}^\infty$ of immediate rewards.
Assuming positive immediate rewards, this results in a greater value than would be obtained on the reward sequence itself, as evidenced by the plot in \autoref{fig:fixed_l_varied_g}.
As a consequence of the contraction property resulting from the standard discounting, the overall sum converges in spite of the fact that the cumulative asset stream may not be bounded.

\section {Average Depreciating Payoff}
\label{sec:average}
Let us now consider the asymptotic average evaluation criterion, given that assets depreciate.
The payoff of an outcome in this context is defined as
\begin{equation}
	\liminf_{n \to \infty} \sum^n_{k=1} \sum^k_{i=1} \frac{R(s_i, a_i) \gamma^{k-i}}{n}, \tag{Average Depreciating Payoff}
\end{equation}
and the associated average depreciating value function is
\begin{equation*}
	V^\gamma(s) = \sup_{\pi \in \Pi^M} \bb{E}^\pi_s\bk{\liminf_{n \to \infty} \sum^n_{k=1} \sum^k_{i=1} \frac{R(s_i, a_i) \gamma^{k-i}}{n}}.
\end{equation*}

\noindent
Our main result in this section asymptotically relates the average depreciating value and the discounted depreciating value.

\begin{theorem}[Tauberian Theorem]
	\label{thm:FP-MP}
 The limit of discounted depreciating value as $\lambda \to 1$ from below, scaled by $(1-\lambda)$, converges to the average depreciating value:
	\begin{equation*}
		\lim_{\lambda \to 1} (1 - \lambda) V_\lambda^\gamma = V^\gamma.
	\end{equation*}
\end{theorem}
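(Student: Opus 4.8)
The plan is to reduce the theorem to the single identity $V^\gamma = \frac{V}{1-\gamma}$, where $V$ is the ordinary average value. On the discounted side, the identity $V_\lambda^\gamma = \frac{V_\lambda}{1-\lambda\gamma}$ established in \eqref{eq:FPtoF1}, combined with the classical Bewley--Kohlberg Tauberian theorem $\lim_{\lambda\to 1}(1-\lambda)V_\lambda = V$, immediately yields
\[
\lim_{\lambda\to 1}(1-\lambda)V_\lambda^\gamma = \lim_{\lambda\to 1}\frac{(1-\lambda)V_\lambda}{1-\lambda\gamma} = \frac{V}{1-\gamma},
\]
which is precisely \eqref{eq:FPtoM}. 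Hence it suffices to show that the average depreciating value coincides with $\frac{V}{1-\gamma}$, i.e. that the depreciating average objective is, along every path, a fixed positive multiple of the ordinary average objective.

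The heart of the argument is therefore a pathwise computation. Fixing an infinite path $s_1 a_1 s_2 \cdots$, I would interchange the order of the two finite summations defining the $n$-th depreciating average, collecting all terms carrying a fixed reward $R(s_i,a_i)$ and evaluating the resulting inner geometric sum $\sum_{k=i}^n \gamma^{k-i} = \frac{1-\gamma^{n-i+1}}{1-\gamma}$. This rewrites the depreciating partial average as
\[
\frac{1}{n}\sum^n_{k=1}\sum^k_{i=1}R(s_i,a_i)\gamma^{k-i} = \frac{1}{1-\gamma}\paren{A_n - E_n},
\]
where $A_n = \frac{1}{n}\sum^n_{i=1}R(s_i,a_i)$ is the ordinary running average and $E_n = \frac{1}{n}\sum^n_{i=1}R(s_i,a_i)\gamma^{n-i+1}$ is a recency-weighted correction.

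Next I would show that the correction term vanishes. Since $M$ is finite, the rewards are bounded by some $R_{\max}$, so $\card{\sum^n_{i=1}R(s_i,a_i)\gamma^{n-i+1}} \leq R_{\max}\sum^\infty_{j=1}\gamma^j = \frac{R_{\max}\gamma}{1-\gamma}$ uniformly in $n$, whence $E_n \to 0$. Because a vanishing additive perturbation leaves a limit inferior unchanged and a positive multiplicative constant factors through it, this gives the pathwise identity $\liminf_n \frac{1}{n}\sum_k\sum_i R(s_i,a_i)\gamma^{k-i} = \frac{1}{1-\gamma}\liminf_n A_n$. As this holds for every realized path, the two objectives agree as random variables; pulling the positive constant $\frac{1}{1-\gamma}$ out of $\bb{E}^\pi_s$ and then out of the supremum over $\Pi^M$ yields $V^\gamma = \frac{V}{1-\gamma}$, and combining with the first step completes the proof.

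I expect the main obstacle to be the careful handling of the limit inferior: one must justify that replacing $A_n - E_n$ by $A_n$ leaves $\liminf$ unchanged and that the constant $\frac{1}{1-\gamma} > 0$ commutes with $\liminf$, rather than informally treating the sequences as if they converged. Everything else---the summation interchange, the geometric evaluation, and the interchange of the constant with the expectation and the supremum---is routine, and the hypothesis $\gamma \in (0,1)$ is exactly what makes the geometric bound on $E_n$ finite.
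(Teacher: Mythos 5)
Your proposal is correct and follows essentially the same route as the paper: both reduce the theorem via \eqref{eq:FPtoM} (i.e., $V_\lambda^\gamma = \frac{V_\lambda}{1-\lambda\gamma}$ plus the classical Tauberian result) to the identity $V^\gamma = \frac{V}{1-\gamma}$, rewrite the $n$-th depreciating partial average as $\frac{1}{1-\gamma}\left(A_n - E_n\right)$, prove the recency-weighted correction $E_n$ vanishes using finiteness of the MDP, and then pass the constant through the $\liminf$, the expectation, and the supremum. The only difference is in how the key algebraic identity (the paper's Lemma~\ref{lemma:helper1}) is obtained: you derive it in one step by interchanging the order of summation and evaluating the inner geometric sum, whereas the paper proves the same identity by induction on $n$; your derivation is the more direct of the two.
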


The proof of \autoref{thm:FP-MP} uses the following pair of lemmas.

\begin{lemma}
	\label{lemma:helper1}
	For any finite path in the environmental MDP,
	\begin{equation}
		\label{eq:helper1}
		\sum^n_{k=1} \sum^k_{i=1} \frac{R(s_i, a_i) \gamma^{k-i}}{n} = \sum^n_{k=1} \frac{R(s_k, a_k) (1 - \gamma^{n+1-k})}{n(1 - \gamma)}.
	\end{equation}
\end{lemma}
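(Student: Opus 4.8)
The plan is to prove \eqref{eq:helper1} by an elementary interchange of the order of summation over a triangular index set, followed by the evaluation of a finite geometric series. First I would note that the factor $\frac{1}{n}$ is independent of both summation indices and may be pulled outside, so it suffices to establish the identity for the double sum $\sum_{k=1}^n \sum_{i=1}^k R(s_i, a_i)\gamma^{k-i}$. The pairs $(i,k)$ contributing to this sum are precisely those satisfying $1 \le i \le k \le n$, that is, the points of the lower-triangular lattice region.

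Next I would interchange the order of summation, taking $i$ as the outer index and $k$ as the inner index. Reading the constraint $1 \le i \le k \le n$ with $i$ fixed shows that $k$ must range from $i$ up to $n$, so the double sum becomes
\[
\sum_{i=1}^n R(s_i, a_i) \sum_{k=i}^n \gamma^{k-i}.
\]
Then I would evaluate the inner sum: the substitution $j = k - i$ rewrites it as $\sum_{j=0}^{n-i} \gamma^{j}$, a finite geometric series of $n - i + 1$ terms, which equals $\frac{1 - \gamma^{n+1-i}}{1-\gamma}$ because $\gamma \in (0,1)$ guarantees $\gamma \ne 1$. Substituting this back, reinstating the $\frac{1}{n}$ factor, and relabeling the dummy index $i$ as $k$ yields exactly the right-hand side of \eqref{eq:helper1}.

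I expect no substantial obstacle, as the computation is entirely elementary and requires no appeal to convergence or any result beyond the closed form for a finite geometric sum. The only point demanding care is the index bookkeeping: correctly reading off the bounds $i \le k \le n$ after the swap, and tracking the number of terms in the geometric series so that the exponent in the final expression comes out as $\gamma^{n+1-k}$ rather than $\gamma^{n-k}$. Since the statement is asserted for an arbitrary finite path, the identity holds deterministically for each fixed realization of the states and actions, and no probabilistic argument is involved at this stage.
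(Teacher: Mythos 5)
Your proof is correct, but it takes a genuinely different route from the paper. The paper proves \eqref{eq:helper1} by induction on $n$: it splits off the $k=n$ term of the outer sum, rescales the remaining double sum so that the inductive hypothesis applies, and then carries out an algebraic cancellation to recover the closed form for length $n$. You instead give a direct computation: interchange the order of summation over the triangular region $1 \le i \le k \le n$, observe that for fixed $i$ the inner sum $\sum_{k=i}^n \gamma^{k-i}$ is a finite geometric series equal to $\frac{1-\gamma^{n+1-i}}{1-\gamma}$, and relabel. Your index bookkeeping is right (the series has $n-i+1$ terms, giving the exponent $n+1-i$, which matches the paper's $\gamma^{n+1-k}$ after relabeling), and you correctly note that no convergence considerations arise since everything is finite. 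Your argument is shorter and arguably more illuminating: it exhibits the structural reason the identity holds, namely that each reward $R(s_i,a_i)$ contributes a truncated geometric tail to the running depreciated totals, whereas the paper's induction verifies the closed form mechanically and presupposes that the right-hand side has already been guessed. The induction, on the other hand, requires no reordering of a double sum and proceeds entirely by local manipulations, which some readers may find easier to check line by line. Either proof is fully adequate for the role the lemma plays in \autoref{thm:FP-MP}.
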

\begin{proof}
	We proceed by induction on $n$.
	
	\item\paragraph*{Base case.}
	Suppose that $n=1$.
	Then both expressions occurring in \eqref{eq:helper1} evaluate to $R(s_1, a_1)$.

	\item\paragraph*{Inductive case.}
	Suppose that \eqref{eq:helper1} holds for $n-1$.
	By splitting the summation on the left-hand side of \eqref{eq:helper1}, we obtain the expression
	\begin{equation*}
		\sum^{n-1}_{k=1} \sum^k_{i=1} \frac{R(s_i, a_i) \gamma^{k-i}}{n} + \sum^n_{k=1} \frac{R(s_k, a_k) \gamma^{n-k}}{n}.
	\end{equation*}
	Factoring $\frac{n-1}{n}$ from the double summation in this expression yields
	\begin{equation*}
		\frac{n-1}{n} \sum^{n-1}_{k=1} \sum^k_{i=1} \frac{R(s_i, a_i) \gamma^{k-i}}{n-1} + \sum^n_{k=1} \frac{R(s_k, a_k) \gamma^{n-k}}{n}.
	\end{equation*}
	Now, applying the inductive hypothesis, this may be rewritten as
	\begin{equation*}
		\frac{n-1}{n} \sum^{n-1}_{k=1} \frac{R(s_k, a_k) (1 - \gamma^{n-k})}{(n-1)(1 - \gamma)} + \sum^n_{k=1} \frac{R(s_k, a_k) \gamma^{n-k}}{n}.
	\end{equation*}
	Factoring out $\frac{1}{n(1-\gamma)}$ from the entire expression, we get
	\begin{equation*}
		\frac{\sum\limits^{n-1}_{k=1} R(s_k, a_k) (1 {-} \gamma^{n-k}) + (1{-}\gamma)\sum\limits^n_{k=1} R(s_k, a_k) \gamma^{n-k}}{n(1-\gamma)}.
	\end{equation*}
	Distributing through the numerator results in the expression
	\begin{equation*}
		\hspace{-2pt} \frac{\sum\limits^{n{-}1}_{k{=}1} R(s_k{,}a_k) {-} R(s_k{,}a_k) \gamma^{n{-}k} {+} \sum\limits^n_{k{=}1} R(s_k{,}a_k) \gamma^{n{-}k} {-} R(s_k{,}a_k) \gamma^{n{+}1{-}k}}{n(1-\gamma)}
	\end{equation*}
	and removing those terms that cancel additively yields
	\begin{equation*}
		\frac{\sum\limits^{n}_{k{=}1} R(s_k, a_k) - \sum\limits^n_{k{=}1} R(s_k,a_k) \gamma^{n{+}1{-}k}}{n(1-\gamma)}.
	\end{equation*}
	Finally, we obtain \eqref{eq:helper1} by factoring the numerator one last time:
	\begin{equation*}
		\sum^n_{k=1} \frac{R(s_k,a_k) (1 - \gamma^{n+1-k})}{n(1-\gamma)},
	\end{equation*}
	thereby proving that if \eqref{eq:helper1} holds for paths of length $n-1$, then it also holds for paths of length $n$.
\end{proof}

\begin{lemma}
	\label{lemma:helper2}
	For any infinite path in the environmental MDP,
	\begin{equation*}
		\lim_{n \to \infty} \sum^n_{k=1} \frac{R(s_k, a_k) \gamma^{n+1-k}}{n(1-\gamma)} = 0.
	\end{equation*}
\end{lemma}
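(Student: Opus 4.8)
The plan is to exploit two facts: rewards on a finite MDP are uniformly bounded, and the geometrically decaying weights $\gamma^{n+1-k}$ contribute only a bounded amount of mass independent of $n$, so that dividing by $n$ forces the limit to vanish. First I would fix $M = \max_{(s,a) \in S \times A} |R(s,a)|$, which is finite since $S$ and $A$ are finite and hence $R$ has finite image; this yields a uniform bound $|R(s_k, a_k)| \le M$ along any path.

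Next I would reindex the inner sum via the substitution $j = n + 1 - k$, rewriting $\sum_{k=1}^n R(s_k, a_k)\gamma^{n+1-k}$ as $\sum_{j=1}^n R(s_{n+1-j}, a_{n+1-j})\gamma^{j}$. Applying the triangle inequality together with the uniform bound then gives
\[
\left| \sum_{k=1}^n R(s_k, a_k)\gamma^{n+1-k} \right| \le M \sum_{j=1}^n \gamma^j \le \frac{M\gamma}{1-\gamma},
\]
where the final inequality sums the geometric series and uses $\gamma \in (0,1)$. The decisive point is that this bound does not depend on $n$.

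Finally, dividing through by $n(1-\gamma)$ yields
\[
\left| \sum_{k=1}^n \frac{R(s_k, a_k)\gamma^{n+1-k}}{n(1-\gamma)} \right| \le \frac{M\gamma}{n(1-\gamma)^2},
\]
and letting $n \to \infty$ drives the right-hand side to $0$, so the squeeze theorem delivers the claim. I do not anticipate a genuine obstacle here; the only subtlety worth flagging is conceptual rather than technical, namely recognizing that reading the weights $\gamma^{n+1-k}$ backward from the most recent index exhibits them as a convergent geometric series whose total mass stays bounded uniformly in $n$. Thus the recency weighting concentrates on a finite amount of mass that the $\frac{1}{n}$ factor annihilates in the limit, regardless of the particular rewards encountered along the path.
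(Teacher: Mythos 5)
Your proof is correct and takes essentially the same approach as the paper: both bound the rewards uniformly using finiteness of the MDP, observe that the geometric weights sum to a quantity bounded independently of $n$, and let the $\frac{1}{n}$ factor force the limit to zero via a squeeze argument. The only cosmetic difference is that you use a single absolute-value bound $M$ with the triangle inequality, while the paper squeezes between the minimum and maximum rewards $r_\downarrow$ and $r_\uparrow$.
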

\begin{proof}
	Factoring out the constant term in the denominator of the left-hand side of the claimed equation, we obtain the equivalent expression
	\begin{equation*}
		\frac{1}{1-\gamma} \lim_{n \to \infty}\frac{1}{n} \sum^n_{k=1} R(s_k, a_k) \gamma^{n-k}.
	\end{equation*}
	Since the environmental MDP is assumed to be finite, there are finitely many possible reward values and we can bound the summation in the above expression as
	\begin{equation*}
		\frac{r_\downarrow (1 - \gamma^{n-1})}{1-\gamma} \leq \sum^n_{k=1} R(s_k, a_k) \gamma^{n-k} \leq \frac{r_\uparrow (1 - \gamma^{n-1})}{1-\gamma}
	\end{equation*}
	where $r_\downarrow = \min_{(s,a) \in S \times A} R(s,a)$ and $r_\uparrow = \max_{(s,a) \in S \times A} R(s,a)$.
	Lastly, noticing that
	\begin{equation*}
		\lim_{n \to \infty} \frac{r_\downarrow (1 - \gamma^{n-1})}{n(1-\gamma)} = \lim_{n \to \infty} \frac{r_\uparrow (1 - \gamma^{n-1})}{n(1-\gamma)} = 0,
	\end{equation*}
	it follows that
	\begin{equation*}
		\lim_{n \to \infty}\frac{1}{n} \sum^n_{k=1} R(s_k, a_k) \gamma^{n-k} = 0.
	\end{equation*}
\end{proof}

Now we are in position to prove \autoref{thm:FP-MP}.

\begin{proof}[Proof of \autoref{thm:FP-MP}]
	In light of equation \eqref{eq:FPtoM}, it is sufficient to prove the identity $V^\gamma = \frac{V}{1 - \gamma}$.
	Applying \autoref{lemma:helper1}, the average depreciating payoff may be rewritten as
	\begin{equation*}
		\liminf_{n \to \infty} \sum^n_{k=1} \frac{R(s_k, a_k) (1 - \gamma^{n+1-k})}{n(1 - \gamma)}.
	\end{equation*}
	Distributing the product in the numerator and then breaking the summation into a difference of summations yields the expression
	\begin{equation*}
		\liminf_{n \to \infty} \paren{\sum^n_{k=1} \frac{R(s_k, a_k)}{n (1-\gamma)} - \sum^n_{k=1} \frac{R(s_k, a_k) \gamma^{n+1-k}}{n (1-\gamma)}}.
	\end{equation*}
	By \autoref{lemma:helper2}, the right-hand term in this difference tends to 0 as $n \to \infty$, and so the above expression is equivalent to
	\begin{equation*}
		\liminf_{n \to \infty} \sum^n_{k=1} \frac{R(s_k, a_k)}{n (1-\gamma)}.
	\end{equation*}
	Factoring the constant term in the denominator out, the remaining limit-term is exactly the definition of the average payoff, and thus we conclude, for any state $s$, that
	\begin{equation*}
		V^\gamma(s) = \frac{V(s)}{1 - \gamma}.
	\end{equation*}
\end{proof}

As a direct consequence of \autoref{thm:FP-MP}, there exists a Blackwell optimal policy that is optimal for $V_\lambda^\gamma$ when $\lambda$ is sufficiently close to 1, that is also optimal for $V^\gamma$.

\begin{corollary}
	There exists a discount factor $\lambda_0 \in (0,1)$ and a policy $\pi$ such that, for all $\lambda \in [\lambda_0, 1)$ and every state $s$, it holds that
	\begin{align*}
		V_\lambda^\gamma(s) &= \bb{E}^\pi_s\bk{\sum^\infty_{n=1} \lambda^{n-1} \sum^n_{k=1} R(s_k, a_k) \gamma^{n-k}}, \\
		V^\gamma(s) &= \bb{E}^\pi_s\bk{\liminf_{n \to \infty} \frac{1}{n} \sum^n_{k=1}\sum^k_{i=1} R(s_i, a_i) \gamma^{k-i}}.
	\end{align*}
\end{corollary}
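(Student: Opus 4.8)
The plan is to exhibit a single stationary deterministic policy $\pi$ that is simultaneously optimal for the discounted depreciating payoff at every discount factor near $1$ and for the average depreciating payoff, and to obtain both optimalities by transferring them from the corresponding standard payoffs through two policy-independent scalings. The two scalings are already in hand: equation \eqref{eq:FPtoF1} gives $V_\lambda^\gamma = V_\lambda/(1-\lambda\gamma)$, and the proof of \autoref{thm:FP-MP} gives $V^\gamma = V/(1-\gamma)$. In each case the depreciating value is the standard value multiplied by a strictly positive factor independent of the policy, so a policy maximizes the depreciating objective exactly when it maximizes the corresponding standard objective.

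First I would invoke the classical theory of Blackwell optimality on finite MDPs to obtain a stationary deterministic policy $\pi$ together with a threshold $\lambda_0 \in (0,1)$ such that $\pi$ attains the discounted value $V_\lambda$ for every $\lambda \in [\lambda_0, 1)$, and such that the same $\pi$ also attains the average value $V$. That one policy is optimal both across all sufficiently large discount factors and for the average criterion is precisely the content of Blackwell's theorem (via the Laurent expansion of the discounted value around $\lambda = 1$), which I would cite rather than reprove; this $\pi$ and $\lambda_0$ serve as the witnesses demanded by the statement.

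Next I would check the two displayed equalities at the level of the fixed policy $\pi$. For the discounted identity, the Cauchy-product factorization used in the proof of \autoref{thm:optimalityEQ} applies verbatim with the supremum over policies replaced by the expectation under $\pi$ (the series converge absolutely on every path, since rewards are bounded and $\lambda, \gamma \in (0,1)$), yielding that the depreciating value of $\pi$ equals $V_\lambda^\pi(s)/(1-\lambda\gamma)$; because $\pi$ attains $V_\lambda$ for $\lambda \geq \lambda_0$, this equals $V_\lambda(s)/(1-\lambda\gamma) = V_\lambda^\gamma(s)$. For the average identity, \autoref{lemma:helper1} and \autoref{lemma:helper2} show pathwise that the average depreciating payoff equals $\frac{1}{1-\gamma}$ times the ordinary average payoff, the vanishing discrepancy term leaving the $\liminf$ unchanged; taking expectations under $\pi$ gives the average depreciating value of $\pi$ as $V^\pi(s)/(1-\gamma)$, which equals $V(s)/(1-\gamma) = V^\gamma(s)$ by the average-optimality of $\pi$.

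The only genuinely nontrivial ingredient is the existence of a single policy that is optimal over the whole range of large discount factors and for the average criterion at once; everything else is a direct transfer through positive, policy-independent scalings. Since Blackwell optimality already supplies such a policy for the standard payoffs and the two scaling identities preserve optimality exactly, the corollary follows with $\pi$ and $\lambda_0$ inherited from the standard Blackwell-optimal policy.
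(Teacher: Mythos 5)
Your proposal is correct and follows essentially the same route the paper intends: the paper presents this corollary as a direct consequence of \autoref{thm:FP-MP}, relying on classical Blackwell optimality for the standard payoffs together with the policy-independent scalings $V_\lambda^\gamma = V_\lambda/(1-\lambda\gamma)$ from \eqref{eq:FPtoF1} and $V^\gamma = V/(1-\gamma)$ from the proof of \autoref{thm:FP-MP}. Your write-up simply makes explicit the pathwise transfer of these identities to a fixed policy, which is exactly the argument the paper leaves implicit.
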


In turn, this implies the following result on the strategic complexity for the average depreciating payoff.

\begin{corollary}[Strategic Complexity]
	For any average depreciating payoff over any finite MDP, there exists an optimal policy that is stationary and deterministic. 
\end{corollary}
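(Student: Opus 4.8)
The plan is to reduce optimality for the average depreciating payoff to optimality for the \emph{standard} average payoff, for which stationary deterministic optimal policies are already known to exist classically. The crucial observation is that the argument underlying \autoref{thm:FP-MP} is in fact \emph{pathwise}: combining \autoref{lemma:helper1} and \autoref{lemma:helper2}, for every infinite path $s_1 a_1 s_2 \cdots$ one obtains the identity
\begin{equation*}
	\liminf_{n \to \infty} \sum^n_{k=1} \sum^k_{i=1} \frac{R(s_i, a_i)\gamma^{k-i}}{n} = \frac{1}{1-\gamma} \liminf_{n \to \infty} \frac{1}{n} \sum^n_{k=1} R(s_k, a_k),
\end{equation*}
since the discrepancy term isolated in \autoref{lemma:helper2} tends to $0$ along the path and the positive constant $\frac{1}{1-\gamma}$ commutes with the $\liminf$. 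Thus the average depreciating payoff of any outcome is exactly $\frac{1}{1-\gamma}$ times its ordinary average payoff.

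First I would lift this pathwise identity through the expectation $\bb{E}^\pi_s$, so that for every policy $\pi$ and state $s$ the expected average depreciating payoff equals $\frac{1}{1-\gamma}$ times the expected ordinary average payoff. Because the scaling factor is a positive constant independent of both the policy and the outcome, the two objectives are maximized by precisely the same policies, and in particular the optimal policies for $V^\gamma$ coincide with those for $V$. I would then invoke the classical theorem guaranteeing a stationary deterministic policy $\pi^*$ optimal for the ordinary average payoff \citep{Puterman94,FilarVrieze96}. This $\pi^*$ attains $V(s)$ from every state, hence attains $V^\gamma(s) = \frac{V(s)}{1-\gamma}$, so it is a stationary deterministic optimal policy for the average depreciating payoff, which is the claim.

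An alternative route, matching the framing of the preceding corollary, proceeds through Blackwell optimality. Since $V_\lambda^\gamma = \frac{V_\lambda}{1-\lambda\gamma}$ with $\frac{1}{1-\lambda\gamma}$ a positive constant, a policy is optimal for the discounted depreciating payoff at $\lambda$ exactly when it is optimal for the ordinary discounted payoff at $\lambda$. Classical Blackwell optimality then furnishes a single stationary deterministic policy that is discount-optimal for all $\lambda \in [\lambda_0, 1)$, and \autoref{thm:FP-MP} transfers this optimality to the average depreciating payoff in the limit $\lambda \to 1$.

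The step requiring the most care is ensuring that the reduction holds at the level of individual policies rather than merely of value functions: one must verify that the factor relating the two average objectives is a genuine constant, independent of the path, the policy, and the initial state, so that the $\argmax$ over policies is preserved and optimality transfers to a concrete stationary deterministic witness. Once this is confirmed, the remainder is a direct appeal to \autoref{lemma:helper1}, \autoref{lemma:helper2}, and the classical average-payoff complexity result.
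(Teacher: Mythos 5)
Your proof is correct, but your primary route is organized differently from the paper's. The paper derives this corollary from the \emph{preceding} Blackwell-optimality corollary: classical theory gives a stationary deterministic policy that is discount-optimal for all $\lambda$ near $1$; since $V_\lambda^\gamma = \frac{V_\lambda}{1-\lambda\gamma}$, that policy is optimal for the discounted depreciating payoff on $[\lambda_0,1)$, and \autoref{thm:FP-MP} then transfers its optimality to the average depreciating payoff --- this is precisely your ``alternative route.'' Your main argument instead bypasses Blackwell optimality entirely: you observe that the proof of \autoref{thm:FP-MP} (via \autoref{lemma:helper1} and \autoref{lemma:helper2}) is pathwise, so every outcome's average depreciating payoff is exactly $\frac{1}{1-\gamma}$ times its ordinary average payoff; lifting through $\bb{E}^\pi_s$ shows the two objectives have identical sets of optimal policies, and the classical existence of stationary deterministic average-optimal policies finishes the job. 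This is a legitimately more elementary derivation --- it needs only the standard average-payoff strategic-complexity theorem rather than the existence of Blackwell-optimal policies --- and it makes explicit the pathwise character of the reduction, which the paper's proof of \autoref{thm:FP-MP} establishes but does not emphasize. What the paper's route buys in exchange is the stronger intermediate statement that a \emph{single} policy simultaneously witnesses optimality for $V_\lambda^\gamma$ on an entire interval $[\lambda_0,1)$ and for $V^\gamma$. Your closing caution is also well placed: the argument does hinge on the scaling factor $\frac{1}{1-\gamma}$ being a constant independent of the path, policy, and initial state, which is what makes the $\argmax$ over policies invariant; with state-dependent depreciation factors this would break down, as the paper itself notes in its conclusion.
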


\section{Related Work}
\label{sec:related}
Discounted and average payoffs have played central roles in the theory of optimal control and reinforcement learning.
A multitude of deep results exist connecting these objectives
\citep{BewleyKohlberg76,BewleyKohlberg78,MertensNeyman81,AnderssonMiltersen09,ChatterjeeDoyenSingh11,ChatterjeeMajumdar12,Ziliotto16,Ziliotto16G,Ziliotto18}
in addition to an extensive body of work on algorithms for related optimization problems and their complexity
\citep{FilarSchultz86,RaghavanFilar91,RaghavanSyed03,ChatterjeeMajumdarHenzinger08,ChatterjeeIbsenJensen15}.

The value for the depreciating assets is defined as a past discounted sum of rewards.
Past discounted sums for finite sequences were studied in the context of optimization \citep{alur2012regular} and are closely related to exponential recency weighted average, a technique used in nonstationary multi-armed bandit problems \citep{Sutton18} to estimate the average reward of different actions by giving more weight to recent outcomes.
However, to the best of our knowledge, depreciating assets have not been formally studied as a payoff function.

Discounted objectives have found significant applications in areas of program verification and synthesis
\citep{deAlfaroHenzingerMajumdar03,CernyChatterjeeHenzingerRadhakrishnaSing11}. 
Although the idea of past operators is quite old \citep{LichtensteinPnueliZuck85}, relatively recently a number of classical formalisms including temporal logics such as LTL and CTL and the modal $\mu$-calculus have been
extended with past-tense operators and with discounted quantitative semantics
\citep{deAlfaroFaellaHenzingerMajumdarStoelinga05,AlmagorBokerKupferman14,AlmagorBokerKupferman16,littma17}.
A particularly significant result \citep{Markey03} around LTL with classical
boolean semantics is that, while LTL with past operators is no more expressive
than standard LTL, it is exponentially more succinct. 
It remains open whether this type of relationship holds for other logics and their extensions by past operators when interpreted with discounted quantitative semantics \citep{AlmagorBokerKupferman16}. 

\section{Conclusion}
\label{sec:conclusion}
In the stochastic optimal control and reinforcement learning setting the agents select their actions to maximize a discounted payoff associated with the resulting sequence of scalar rewards. 
This interaction models the way dopamine driven organisms maximize their reward sequence based on their capability to delay gratification (discounting). While this paradigm provides a natural model in the context of streams of immediate rewards, when the valuations and objectives are defined in terms of assets that depreciate, the problem cannot be directly modeled in the classic framework.
We initiated the study of optimization and learning for the depreciating assets, and showed a surprising connection between these problems and traditional discounted problems.
Our result enables solving optimization problems under depreciation dynamics by tweaking the algorithmic infrastructure that has been extensively developed over the last several decades for classic optimization problems.

We believe that depreciating assets may provide a useful abstraction to a number of related problems.
The following points sketch some of these directions and state several problems that remain open.
\begin{itemize}[wide]
    \item[{$\blacktriangleright$}] Regret minimization \citep{Cesa-BianchiLugosi06} is a popular criterion in the setting of online
    learning where a decision-maker chooses her actions so as to minimize the average regret---the difference between the realized reward and the reward that could have been achieved.
    We posit that imperfect decision makers may view their regret in a depreciated sense, since a suboptimal action in the recent past tends to cause more regret than an equally suboptimal action in the distant past. 
    We hope that the results of this work spur further interest in developing foundations of past-discounted characterizations of regret in online learning and optimization.
    \item[{$\blacktriangleright$}] In solving multi-agent optimization problems, a practical assumption involves bounding the capability of any adversary by assuming that they have a limited memory of the history of interaction, and this can be modeled via a discounting of past outcomes.
    From our results it follows that two-player zero-sum games with depreciation dynamics under both discounted and average payoffs can be reduced to classic optimization games modulo some scaling of the immediate rewards.
    \item[{$\blacktriangleright$}] The notion of state-based discount factors has been studied in the context of classic optimization and learning.
    Is it possible to extend the results of this paper to the setting with state-dependent depreciation factors?
    This result does not directly follow from the tools developed in this paper, and it remains an open problem.
    \item[{$\blacktriangleright$}] Continuous-time MDPs provide a dense-time analog of discrete-time MDPs and optimization and RL algorithms for such systems are well understood.
    Is it possible to solve optimization and learning for CTMDPs with depreciating assets?
\end{itemize}



\bibliographystyle{plainnat}
\bibliography{references.bib}

\end{document}